\newtheorem{theorem}{Theorem} 
\newtheorem{lemma}[theorem]{Lemma}
\newtheorem{proposition}[theorem]{Proposition} 
\newtheorem{definition}[theorem]{Definition}
\newcommand{\LR}{\mathbf{LR}}
\newcommand{\R}{\mathbb{R}}
\newcommand{\N}{\mathcal{N}}
\newcommand{\Exp}{\mathop\mathbb{E}}
\newcommand{\rank}{\text{rank}}
\newcommand{\ravid}[2]{\textcolor{blue}{Ravid: #1}}
\title{Learning to Compress: Local Rank and \\ Information Compression in Deep Neural Networks}
\author{%
  Niket Patel\\
  Department of Mathematics\\
  University of California, Los Angeles\\
  \texttt{niketpatel@ucla.edu} \\
  \And
  Ravid Shwartz Ziv\\
  New York University, Wand.AI\\
  \texttt{ravid.shwartz.ziv@nyu.edu} \\
}
\begin{document}

\maketitle

\begin{abstract}
Deep neural networks tend to exhibit a bias toward low-rank solutions during training, implicitly learning low-dimensional feature representations. This paper investigates how deep multilayer perceptrons (MLPs) encode these feature manifolds and connects this behavior to the Information Bottleneck (IB) theory. We introduce the concept of \emph{local rank} as a measure of feature manifold dimensionality and demonstrate, both theoretically and empirically, that this rank decreases during the final phase of training. We argue that networks that reduce the rank of their learned representations also compress mutual information between inputs and intermediate layers. This work bridges the gap between feature manifold rank and information compression, offering new insights into the interplay between information bottlenecks and representation learning.
\end{abstract}

\section{Introduction}

The \textbf{Data Manifold Hypothesis} \citep{olah2014neural} suggests that real-world datasets lie on a manifold whose intrinsic dimensionality is much lower than the ambient input space. While numerous models can fit training data, those that generalize well and exhibit robustness likely learn meaningful representations of this underlying manifold. Recent studies have observed that deep neural networks, particularly multilayer perceptrons (MLPs), exhibit an emergent bottleneck structure, where certain layers effectively compress input data into lower-dimensional feature manifolds \cite{jacot2}.

Understanding how this emergent structure aligns with existing theories, such as the Information Bottleneck (IB) theory \citep{tishby2000information, shwartz2017opening}, is crucial to advance our understanding of representation learning in deep networks. This paper aims to provide a theoretical and empirical exploration of this phenomenon by introducing the concept of \emph{local rank} and analyzing its relationship with information compression during training.

\paragraph{This paper makes the following key contributions:}

\begin{itemize}
    \item \textbf{Definition and Analysis of Local Rank:} We define \emph{local rank} as a metric to quantify the dimensionality of feature manifolds within a neural network. We provide theoretical insights into how local rank behaves during training and establish bounds based on implicit regularization.

    \item \textbf{Empirical Evidence of Rank Reduction:} We conduct experiments on synthetic and real-world datasets to demonstrate that local rank decreases during the terminal phase of training, indicating that networks compress the dimensionality of their learned representations.

    \item \textbf{Connection to Information Bottleneck Theory:} We explore the relationship between local rank and information compression, arguing that a reduction in local rank correlates with mutual information compression between inputs and intermediate representations.
\end{itemize}

The remainder of this paper is organized as follows: Section \ref{sec:related} provides a comprehensive review of related literature, situating our work within the broader context of implicit regularization, low-rank bias, and the Information Bottleneck theory. Section \ref{sec:notation} introduces the notation used throughout the paper. In Section \ref{sec:LR}, we define the local rank and present theoretical and empirical analyses. Section \ref{sec:IB} discusses the information-theoretic implications of local rank, connecting it to the Information Bottleneck theory. Finally, Section \ref{sec:discussion} concludes the paper and outlines future research directions.

\section{Related Work}
\label{sec:related}

\paragraph{Implicit Regularization in Deep Learning.} Implicit regularization refers to the phenomenon in which the training dynamics of neural networks, particularly under gradient descent, lead to solutions with desirable properties without explicit regularization terms. \citet{neyshabur2014search} and \citet{zhang2021understanding} investigated how overparameterized networks generalize despite being able to fit random labels. \citet{gunasekar2017implicit} and \citet{arora2019implicit} studied implicit bias in linear and deep matrix factorization, showing that gradient descent favors low-rank solutions.

\paragraph{Low-Rank Representations and Manifolds.} The concept of neural networks learning low-dimensional manifolds has been explored in various contexts. \citet{papyan2020prevalence} introduced the notion of \emph{neural collapse}, where class means converge to a simplex Equiangular Tight Frame at the final layer. \citet{ansuini2019intrinsic} and \citet{ben2023reverse} measured intrinsic dimensionality in deep networks, observing that representations become more compressed in deeper layers. \citet{súkeník2024neuralcollapseversuslowrank} explicitly analyzes the relation between low-rank bias and neural collapse in deep networks.

\paragraph{Rank of Jacobians and Feature Maps.} Closely related to the local rank, \citet{jacot2023implicit} and \citet{jacot2} analyzed the rank of Jacobians in neural networks, connecting it to generalization properties. \citet{humayun2024understanding} studies a similar object in the context of assessing the geometry of diffusion models.\citet{feng2022rank} studied the low-rank structure in the Jacobian matrices of neural networks, showing that rank deficiency can lead to better generalization.

\paragraph{Information Bottleneck Theory.} The Information Bottleneck (IB) framework \citep{tishby2000information} provides a theoretical lens to understand how neural networks balance compression and prediction. \citet{shwartz2017opening} applied IB to deep learning, proposing that networks first fit the data and then compress the representations. Subsequent works, such as \citet{michael2018on} and \citet{NEURIPS2023_6b1d4c03}, explored the universality of this phenomenon, leading to a richer understanding of information dynamics in training.

\paragraph{Relation to Our Work.} Our paper builds on these foundational studies by leveraging a measurable quantity—the local rank—that captures the dimensionality of the learned feature manifolds. We theoretically link this to the implicit regularization of the ranks of weight matrices and empirically demonstrate its reduction during training. Moreover, we connect these geometric properties of neural representations to information-theoretic principles, offering new insights into how networks compress information.

\section{Notation}
\label{sec:notation}
We consider a neural network $\mathcal{N}$ parameterized by $\theta$, mapping inputs to outputs as $\mathcal{N}_\theta: \R^{n_0} \to \R^{n_L}$. The network has depth $L$, and each layer $l \in \{1, \dots, L\}$ consists of an affine transformation followed by a nonlinearity:
\begin{equation}
    h_l(x) = \phi(A_l(h_{l-1}(x))) = \phi(W_l h_{l-1}(x) + b_l),
\end{equation}
where $h_0(x) = x$, and $h_l (x) = \N_\theta(x)$, $W_l \in \R^{n_l \times n_{l-1}}$ is the weight matrix, $b_l \in \R^{n_l}$ is the bias vector, and $\phi(\cdot)$ is an element-wise activation function (e.g., ReLU). The pre-activation at layer $l$ is $p_l(x) = W_l h_{l-1}(x) + b_l$.

The Jacobian matrix of a function $f: \R^n \to \R^m$ at point $x$ is denoted by $J_x f \in \R^{m \times n}$. The rank of a matrix $A$ is denoted by $\rank(A)$, and the $\epsilon$-rank, $\rank_\epsilon(A)$, counts the number of singular values of $A$ greater than $\epsilon$.

\section{The Local Rank of Representations}
\label{sec:LR}

In this section, we introduce the concept of \emph{local rank} as a measure of the dimensionality of feature manifolds learned by neural networks. Consider the data distribution with support $\Omega \subseteq \R^{n_0}$. The feature manifold at layer $l$ is defined as $\mathcal{M}_l = p_l(\Omega)$, where $p_l$ maps input data to pre-activations at layer $l$.

\begin{definition}
The \textbf{local rank} at layer $l$, denoted as $\LR_l$, is defined as the expected rank of the Jacobian of $p_l$ with respect to the input:
\begin{equation}
    \LR_l = \Exp_{x \sim \text{Data}} \left[ \rank(J_x p_l) \right].
\end{equation}

Since the set of matrices without full rank is measure 0, we find it more convenient to look at an approximation for the local rank. For a threshold $\epsilon > 0$, the \textbf{robust local rank} at layer $l$ is:
\begin{equation}
    \LR_l^\epsilon = \Exp_{x \sim \text{Data}} \left[ \rank_\epsilon(J_x p_l) \right].
\end{equation}

\end{definition}

This is a meaningful measure of the rank of the feature manifold since the Jacobian's null space identifies the input dimensions which vanish near $x$, so its rank captures the true number of feature dimensions.

\subsection{Theoretical Analysis of Local Rank}

We investigate how the local rank behaves under gradient flow dynamics and its connection to implicit regularization. Under suitable assumptions \citep{wang2021implicit, Lyu2020Gradient, NEURIPS2020_c76e4b2f}, solutions to the gradient flow ODE with exponential tailed losses have been shown to converge in direction to a Karush-Kuhn-Tucker (KKT) point of the following optimization problem, where $\{(x_i, y_i)\}_{i=1}^n \subseteq \R^{n_0} \times \{-1, 1\}$ is the training dataset:
\begin{equation}
\label{optimization_problem}
\min_\theta \frac{1}{2} \|\theta\|^2 \quad \text{subject to} \quad \forall i \in [n], \, y_i \mathcal{N}_\theta(x_i) \geq 1.
\end{equation}

This convergence to a KKT point implies that the solution minimizes the norm of the weights while satisfying the classification constraints. The minimization of the norm leads to implicit regularization effects, including the potential reduction in the rank of weight matrices.

If an intermediate layer exists with a low local rank, it can be viewed as a bottleneck in terms of the dimensionality of the feature manifold. We can establish a connection between Equation (\ref{optimization_problem}) and the existence of a bottleneck layer with low local rank. Specifically, we can prove the existence of such a bottleneck layer under the global optimum of this problem as a consequence of the implicit regularization of the ranks of weight matrices.

\begin{proposition}
\label{prop:local_rank_bound}
\textbf{(Informal)} Let $\mathcal D = \{(x_i, y_i)\}_{i=1}^n \subseteq \R^{n_0} \times \{-1, 1\}$ be a binary classification dataset. 
Assume there exists a fully connected neural network with weight matrices uniformly bounded by $B$ that correctly classifies every data point in $\mathcal D$ with margin 1.
Then, for $\theta = (W_1, \cdots W_L)$ parameterizing a $L$ layer neural network at the global optimum to \ref{optimization_problem}, there exists a layer $l$ and $\epsilon_0 > 0$ such that for all $0 < \epsilon < \epsilon_0$:
\begin{equation}
    \LR_l^\epsilon \leq \frac{2}{\epsilon^2} \left(\frac{B}{\sqrt{2}}\right)^{\frac{2K}{L}} \frac{L+1}{L} \|W_l\|_\sigma^2,
\end{equation}
where $\|W_l\|_\sigma$ denotes the operator norm of $W_l$.
\end{proposition}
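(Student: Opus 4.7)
The plan is to pass from the $\epsilon$-rank of the Jacobian to its squared Frobenius norm, factor that Jacobian layerwise using the chain rule, and then spend the weight-norm budget supplied by the KKT/global-optimality hypothesis to control all factors except $W_l$.

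First I would use the elementary bound $\rank_\epsilon(A)\le \|A\|_F^2/\epsilon^2$, which follows because each singular value exceeding $\epsilon$ contributes at least $\epsilon^2$ to $\sum_i \sigma_i^2=\|A\|_F^2$. Taking expectations, $\LR_l^\epsilon\le \Exp_x\|J_x p_l\|_F^2/\epsilon^2$. Next, I would expand the Jacobian by the chain rule as
\[
J_x p_l \;=\; W_l\,D_{l-1}(x)\,W_{l-1}\,D_{l-2}(x)\cdots D_1(x)\,W_1,
\]
where each $D_i(x)$ is the diagonal matrix of activation derivatives at $x$, satisfying $\|D_i(x)\|_\sigma\le 1$ for ReLU (and other contractive activations). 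Repeated application of $\|AB\|_F\le \|A\|_\sigma\|B\|_F$ gives a pointwise estimate $\|J_x p_l\|_F^2\le \|W_l\|_\sigma^2 \prod_{i\ne l}\|W_i\|_\sigma^2$ after absorbing a dimension constant. The role of the threshold $\epsilon_0$ is simply to ensure this bound is sharper than the ambient rank bound $\min(n_0,n_l)$, so the estimate is non-vacuous.

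For the implicit-regularization step, I would invoke optimality of $\theta$ for problem \eqref{optimization_problem}: since the hypothesized comparison network $\theta^\star$ is feasible and has each $\|W_l^\star\|_\sigma\le B$, the conversion to Frobenius norm via the layer widths yields $\|\theta\|^2\le \|\theta^\star\|^2\le K B^2$ for a constant $K$ controlled by the comparison-network geometry (this should be where the exponent $K$ in the statement enters). I would then peel off $\|W_l\|_\sigma^2$ from the product and bound the remaining $L-1$ factors by AM-GM against the budget $\sum_{i\ne l}\|W_i\|_\sigma^2\le \sum_i\|W_i\|_F^2 = \|\theta\|^2\le KB^2$. Choosing $l$ to be the layer that minimizes $\prod_{i\ne l}\|W_i\|_\sigma^2$ (equivalently, the layer of largest operator norm), and optimizing the AM-GM split so that the pulled-out $\|W_l\|_\sigma^2$ factor stays exposed, should produce the geometric-mean factor $(B/\sqrt{2})^{2K/L}$ and the bookkeeping constant $(L+1)/L$.

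The main obstacle is the final bookkeeping in the AM-GM step: one must extract $\|W_l\|_\sigma^2$ as an explicit factor while simultaneously showing the residual product of $L-1$ norms concentrates to the precise form $(B/\sqrt{2})^{2K/L}(L+1)/L$. This requires being careful about how the operator-to-Frobenius conversion constant for the comparison network is absorbed into $K$, and about the choice of the bottleneck layer; a Lagrangian or weighted AM-GM (with one layer weighted differently from the others) is the natural tool, but aligning the constants to the stated form is the delicate computation. A secondary issue is that the identification of $\rank(J_x p_l)$ with the intrinsic manifold dimension, and the use of $\|D_i\|_\sigma\le 1$, both rest on the activation satisfying standard regularity (positive homogeneity of ReLU, in particular), matching the homogeneity assumption under which the cited gradient-flow results guarantee convergence to a KKT point.
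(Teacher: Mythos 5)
There is a genuine gap, and it sits exactly where you flag ``delicate bookkeeping'': the route through $\|J_x p_l\|_F^2$ cannot be made to yield the stated bound. Once you bound $\LR_l^\epsilon \le \Exp_x\|J_x p_l\|_F^2/\epsilon^2$ and factor the Jacobian, you are left with a product $\|W_l\|_\sigma^2\prod_{i\ne l}\|W_i\|_\sigma^2$ over \emph{all} layers, whereas the target bound contains only $\|W_l\|_\sigma^2$ times a constant $(B/\sqrt2)^{2K/L}(L+1)/L$. An AM--GM step against a norm budget of order $KB^2$ gives a factor like $\bigl(KB^2/(L-1)\bigr)^{L-1}$, which grows with $L$ in the exponent rather than decaying like $B^{2K/L}$; no choice of the layer $l$ or weighting fixes this. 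Moreover, the budget itself is not available: the comparison network in the hypothesis has depth $K<L$, so it is not a feasible point of problem (3), which is posed over the fixed depth-$L$ architecture, and $\|\theta\|^2\le\|\theta^\star\|^2\le KB^2$ does not follow. Handling exactly this depth mismatch (by constructing a deeper feasible network from the shallow one, with the attendant $\sqrt2$ factors) is the content of the implicit-regularization theorem of Timor et al.\ that the paper invokes and that your argument would need to re-derive.

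The paper's proof avoids the product over layers entirely. It never bounds the Frobenius norm of the Jacobian: instead it uses that $J_x p_l = W_l D_{l-1}W_{l-1}\cdots W_1$ has rank at most $\rank(W_l)$, and (since $\rank_\epsilon$ is integer-valued and converges to the rank as $\epsilon\to0$) that $\rank_\epsilon(J_x p_l)\le\rank_\epsilon(W_l)$ for all sufficiently small $\epsilon$ --- this is the actual role of $\epsilon_0$, not non-vacuousness of the bound. Then the elementary inequality $\rank_\epsilon(W_l)\le\|W_l\|_F^2/\epsilon^2$ is applied to the single matrix $W_l$, and the implicit-regularization input is Timor et al.'s bound on the harmonic mean of the ratios $\|W_i\|_F/\|W_i\|_\sigma$ at the global optimum: choosing $l$ to be a layer whose ratio lies below that harmonic mean gives
\begin{equation}
\frac{\epsilon\sqrt{\rank_\epsilon(W_l)}}{\|W_l\|_\sigma}\;\le\;\frac{\|W_l\|_F}{\|W_l\|_\sigma}\;\le\;\sqrt2\left(\frac{B}{\sqrt2}\right)^{K/L}\sqrt{\frac{L+1}{L}},
\end{equation}
which rearranges to the claimed bound after taking expectations. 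So the missing idea in your proposal is precisely this harmonic-mean control of the Frobenius-to-operator ratio of one layer (i.e.\ approximate low-rankness of some weight matrix at the minimum-norm solution), combined with rank monotonicity under matrix products rather than norm submultiplicativity.
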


\begin{proof}
The proof leverages recent results from implicit regularization \citep{timor2023implicit} and properties of the Jacobian of ReLU networks. A formal statement with definitions for all constants and proof are provided in Appendix \ref{sec:proofsclass}.
\end{proof}

We note that the right-hand side converges to $\frac{2\|W_l\|_\sigma^2}{\epsilon^2}$ as $L\to\infty$, so this bound is typically much better than the trivial bound on the local rank at layer $l$ given by $\textbf{LR}_l^\epsilon \leq ||W_l||_F^2 \slash \epsilon^2$. 
This proposition implies that during training, certain layers in the network develop low-rank weight matrices, leading to a reduced local rank in the representations. 

Next, we show an analogous result with even a tighter bound for minimum norm solutions to interpolating neural networks for training on \textbf{regression tasks}:

\begin{proposition} \textbf{(Informal)}
    Let $\{(x_i, y_i)\}_{i=1}^n \subset \mathbb{R}^{n_0} \times \mathbb{R}_+$ be a regression dataset. Assume there exists a fully connected neural network $\N$ with weight matrices uniformly bounded by $B$ that interpolates all data points, $\N(x_i) = y_i$. 
    Let $\N_\theta$ be a fully-connected neural network of depth $L$ parameterized by $\theta = [W_1, \dots, W_{L}] $, where $\theta$ is a global optimum of the following optimization problem:

\begin{equation}
\min_\theta \|\theta\| \quad \text{s.t.} \quad \forall i \in [n], \ \N_\theta(x_i) = y_i.
\end{equation}

Then, there exist an $l\in \{1, \cdots , L\}$ and $\epsilon_0 > 0$ such that for $0 < \epsilon <\epsilon_0$ the following holds:

\begin{equation}
   \textbf{LR}_l^\epsilon \leq \frac{||W_l||_\sigma^2 \cdot B^\frac{2K}{L}}{\epsilon^2}
\end{equation}

where $||W_l||_\sigma$ denotes the operator norm of $W_l$.
\end{proposition}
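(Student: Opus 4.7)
The plan is to mirror the proof of the classification version (Proposition~\ref{prop:local_rank_bound}), exploiting the simpler constraint structure of the regression problem. The key observation is that minimizing $\|\theta\|$ subject to $\N_\theta(x_i) = y_i$ forces an implicit balance across layers (quantified via AM-GM on the layer-wise Frobenius norms), which in turn controls the spectrum of the Jacobian through a product-of-norms bound. Because the constraints are equalities rather than margin inequalities, no slack terms arise and the resulting bound comes out cleaner than in the classification case.

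The first step is to use the bounded reference interpolator to upper bound the weight norm of the minimum-norm solution: by optimality $\|\theta\|^2 \le \|\theta_{\mathrm{ref}}\|^2 \le KB^2$, where $K$ plays the role of the parameter budget made precise in the appendix. Next, the chain rule together with submultiplicativity of the spectral norm gives
\begin{equation*}
\|J_x p_l\|_F^2 \;\le\; \|W_l\|_\sigma^2 \prod_{i<l} \|D_i(x)\,W_i\|_\sigma^2 \;\le\; \|W_l\|_\sigma^2 \prod_{i<l} \|W_i\|_\sigma^2,
\end{equation*}
using that the ReLU derivative matrices $D_i(x)$ are diagonal with entries in $\{0,1\}$ and hence have operator norm at most $1$. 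By the standard inequality $\rank_\epsilon(A) \le \|A\|_F^2/\epsilon^2$, this translates into a pointwise (hence expected) bound on $\rank_\epsilon(J_x p_l)$, i.e.\ on $\LR_l^\epsilon$. Finally, choosing $l$ to minimize $\prod_{i<l}\|W_i\|_\sigma^2$ and using $\|W_i\|_\sigma \le \|W_i\|_F$ together with AM-GM on the constraint $\sum_i \|W_i\|_F^2 \le KB^2$ pins this product down by $B^{2K/L}$. The threshold $\epsilon_0$ is simply the regime in which this Markov-type estimate is tighter than the trivial rank bound $\rank(J_x p_l) \le \min(n_l, n_0)$.

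The principal technical difficulty is adapting the implicit-regularization machinery of \citet{timor2023implicit} from the classification setting---where the KKT conditions involve the margin functions $y_i \N_\theta(x_i) \ge 1$---to the regression setting with equality constraints $\N_\theta(x_i) = y_i$. The Lagrangian structure differs, so the interlayer balance that justifies the AM-GM step must be re-derived from the KKT conditions of the regression program; I expect this to be the main obstacle. Once this balance is in hand, the chain-rule and AM-GM steps are essentially routine, and the absence of margin slack is precisely what permits the cleaner exponent $B^{2K/L}$ without the $(L+1)/L$ or $\sqrt{2}$ corrections present in the classification bound.
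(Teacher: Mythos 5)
There is a genuine gap: your route never produces the quantity the proposition is actually about, namely a bound on the stable rank $\|W_l\|_F^2/\|W_l\|_\sigma^2$ of some single layer. The paper's proof simply quotes the interpolation version of the implicit-regularization theorem of Timor et al., which already states (for the regression/equality-constraint problem, no KKT re-derivation needed) that the harmonic mean of the ratios $\|W_i^*\|_F/\|W_i^*\|_\sigma$ is at most $B^{k/L}$; picking a layer $l$ below that harmonic mean, using $\|W_l\|_F \ge \epsilon\sqrt{\rank_\epsilon(W_l)}$, and invoking the lemma $\rank_\epsilon(J_x p_l) \le \rank_\epsilon(W_l)$ for $\epsilon$ small (this lemma, not a Markov-versus-trivial-bound comparison, is where $\epsilon_0$ comes from) gives the result after taking expectation. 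So the ``main obstacle'' you anticipate (adapting the classification KKT balance to equality constraints) is a non-issue, while the substitute argument you sketch does not close.

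Concretely: (i) the step $\|\theta^*\|^2 \le \|\theta_{\mathrm{ref}}\|^2 \le KB^2$ is invalid, because the reference interpolator has depth $k < L$ and is not a feasible point of the depth-$L$ minimum-norm problem; Timor et al.\ must explicitly construct a deep extension (using $y_i \ge 1$) to compare norms, and the resulting bound is not $KB^2$. (ii) The displayed chain-rule inequality is false as written: $\|AB\|_F \le \|A\|_\sigma\|B\|_\sigma$ fails in general (take identities), and the correct version $\|J_x p_l\|_F \le \|W_l\|_F\prod_{i<l}\|W_i\|_\sigma$ puts $\|W_l\|_F^2$, not $\|W_l\|_\sigma^2$, in front --- which is exactly the trivial bound the proposition is meant to improve upon. (iii) AM-GM on $\sum_i\|W_i\|_F^2$ alone cannot yield the factor $B^{2K/L}$; the quoted theorem also needs the complementary lower bound $\prod_i\|W_i\|_\sigma \gtrsim 1$ coming from the constraint that a unit-norm input must be mapped to an output of size at least $1$, and choosing $l$ to minimize $\prod_{i<l}\|W_i\|_\sigma^2$ (which is trivially minimized at $l=1$, where the product is empty) does not interact with the target inequality at all. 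In short, the missing idea is the layer-wise Frobenius-to-spectral ratio bound from the implicit-regularization theorem together with the $\epsilon$-rank comparison between $J_x p_l$ and $W_l$; without these, the proposed estimates collapse to the trivial $\|W_l\|_F^2/\epsilon^2$ bound.
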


\begin{proof}
    As before, we leverage results on implicit regularization from \cite{timor2023implicit}, and a full proof and formal statement can be found in Appendix \ref{sec:proofsinterp}.
\end{proof}

\section{Empirical Evidence of Local Rank Reduction}

We empirically validate the theoretical insights by training MLPs on synthetic and real datasets and measuring the local rank during training.

\paragraph{Synthetic Data.} We train a 3-layer MLP with an input dimension of 100, 200 neurons per hidden layer, and an output dimension of 2. The inputs and outputs are Gaussian with a random cross-covariance matrix, reflecting a scenario where the network learns to map between correlated Gaussian distributions. Here, we use Adam Optimizer with a learning rate of $1e^{-4}$, with the Mean Squared Error Loss function.

\paragraph{MNIST Data.} We also train a 4-layer MLP on the MNIST dataset \citep{mnist}. Each hidden layer has 200 neurons. We use the Adam optimizer with a learning rate of $1e^{-4}$, with the Cross Entropy Loss function.

As shown in Figure~\ref{fig:local_rank_reduction}, we observe a significant drop in local rank during the final stages of training in both cases. This phenomenon occurs across all layers of the network, suggesting that neural networks inherently compress the dimensionality of their learned representations as they converge. This compression occurs in two phases, where the second phase corresponds to the compression phase identified by \citet{shwartzrepresentation} in the IB theory.

\begin{figure}[ht]
    \centering
    \includegraphics[width=\linewidth]{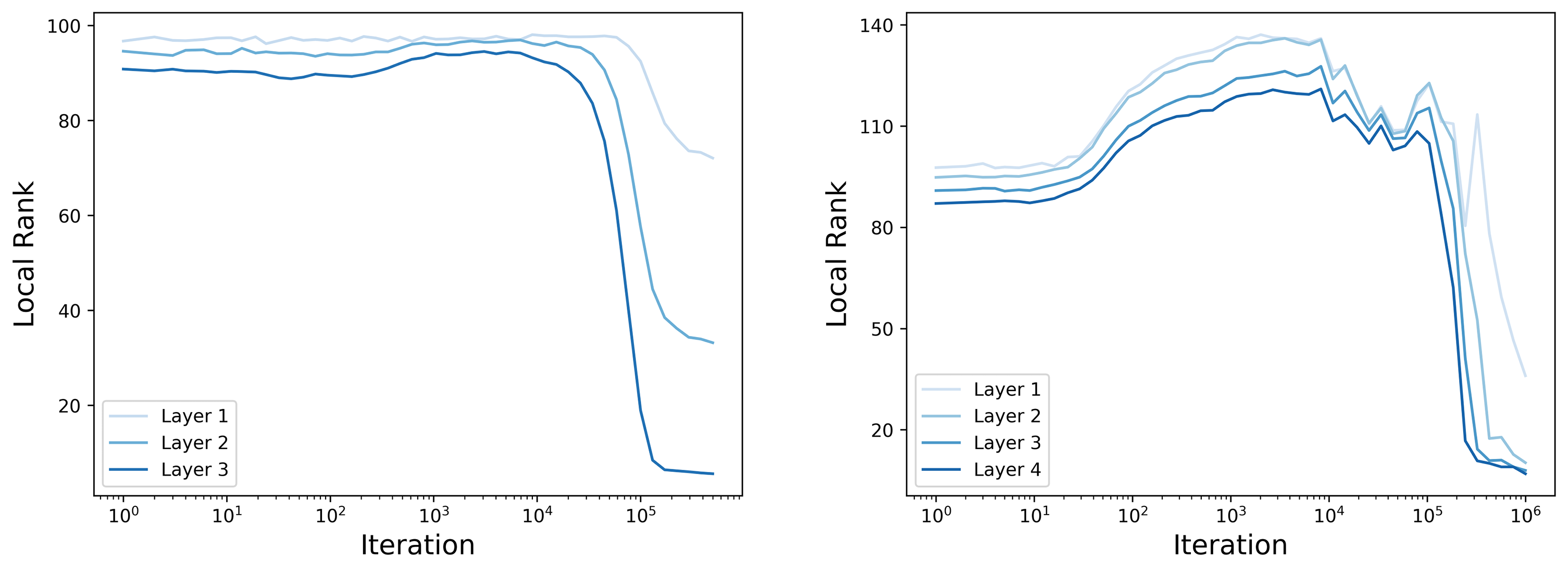}
    \caption{\textbf{Reduction in Local Rank During Training.}
    \textbf{Left:} A 3-layer MLP trained on synthetic Gaussian data.
    \textbf{Right:} A 4-layer MLP trained on MNIST. In both cases, the local rank decreases during the terminal phase of training, indicating compression of the feature manifold across all layers.}
    \label{fig:local_rank_reduction}
\end{figure}

\section{Information Theoretic Implications of Local Rank}
\label{sec:IB}

A core challenge in developing an understanding of deep learning is to define what constitutes as good representation learning. 
The Information Bottleneck (IB) framework provides a theoretical foundation for this by proposing that optimal representations are those which are maximally informative about the targets while compressing redundant information. 
In this section, we explore the relationship between local rank and information compression, positioning our findings within the context of IB theory to provide insight into the structure of learned representations.

\subsection{Information Bottleneck Framework}
The Information Bottleneck \citep{tishby2000information} and \citep{shwartz2022information} provides a principled approach to balance compression and relevance in representations. Given input $X$ and output $Y$, the goal is to find a representation $T$ that maximizes mutual information with $Y$ while minimizing mutual information with $X$. The IB Lagrangian is:
\begin{equation}
    \mathcal{L}_{\text{IB}} = I(T; X) - \beta I(T; Y),
\end{equation}
where $\beta$ controls the trade-off between compression and prediction.

\subsection{Local Rank and the Gaussian Information Bottleneck}

In general, finding analytical solutions to the IB problem is challenging. However, for jointly Gaussian variables, \citet{NIPS2003_7e05d6f8} found an explicit solution for the IB problem, which we can adjust:

\begin{theorem}
\label{thm:gaussian_ib}
\textbf{(Adapted from \citet{NIPS2003_7e05d6f8})} For jointly Gaussian variables $X$ and $Y$, the solution to the IB optimization problem is a noisy linear transformation $T = A_\beta X + \eta$, where $\eta$ is Gaussian noise. Moreover, there exist critical values $\beta_n^c$ such that $0 \leq \beta_i^c \leq \beta_j^c$ whenever $i < j$, and
\begin{equation}
    \rank(A_\beta) = n \quad \text{for} \quad \beta \in (\beta_n^c, \beta_{n+1}^c).
\end{equation}
\end{theorem}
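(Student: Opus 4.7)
The plan is to follow the classical argument of Chechik et al.\ for the Gaussian IB, which proceeds in three stages: reducing the optimization over $T$ to a Gaussian family, solving the resulting finite-dimensional problem in closed form, and then reading off the rank structure of the optimizer as a function of $\beta$.

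First I would argue that when $(X,Y)$ is jointly Gaussian, there is no loss of generality in restricting $T$ to be jointly Gaussian with $(X,Y)$ and of the form $T = AX + \eta$ with $\eta \sim \mathcal{N}(0,\Sigma_\eta)$ independent of $X$. This uses the standard maximum-entropy argument: among all distributions with a given second-order structure, the Gaussian maximizes $h(T)$ and $h(T\mid Y)$ simultaneously, so replacing $T$ by its Gaussian surrogate can only decrease $I(T;X)$ without decreasing $I(T;Y)$, which cannot hurt the IB Lagrangian for any $\beta\ge 0$. A mild rescaling argument lets me further absorb any invertible transformation of $T$, so I may assume $\Sigma_\eta = I$ without loss of generality, leaving $A$ as the only free parameter.

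Next I would write the IB objective in closed form. Using the Gaussian mutual-information formulas,
\begin{equation}
I(T;X) = \tfrac{1}{2}\log\det\bigl(A\Sigma_X A^\top + I\bigr), \qquad I(T;Y) = \tfrac{1}{2}\log\det\bigl(A\Sigma_X A^\top + I\bigr) - \tfrac{1}{2}\log\det\bigl(A\Sigma_{X\mid Y}A^\top + I\bigr),
\end{equation}
so the Lagrangian becomes a function of $A$ alone. Differentiating with respect to $A$ and setting the derivative to zero produces the matrix equation
\begin{equation}
\bigl[(\beta-1) I - \beta\, (A\Sigma_{X\mid Y}A^\top + I)^{-1} A\Sigma_{X\mid Y}A^\top\bigr] A \Sigma_X = \beta\, A\Sigma_{X\mid Y},
\end{equation}
which, after a change of basis that simultaneously diagonalizes $\Sigma_X$ and $\Sigma_{X\mid Y}$, reduces to a system on the rows of $A$. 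Each nonzero row of $A$ must be a left eigenvector of $\Sigma_{X\mid Y}\Sigma_X^{-1}$ with eigenvalue $\lambda_i\in(0,1]$, scaled by a factor that depends on $\beta$ and $\lambda_i$.

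From here, the rank structure follows by a bifurcation analysis. The scaling factor on the $i$th eigendirection is positive only when $\beta > \beta_i^c := 1/(1-\lambda_i)$; for $\beta \le \beta_i^c$ the corresponding row must be zero at the optimum, since turning it on would violate the KKT conditions (equivalently, increase the Lagrangian). Ordering the eigenvalues so that $\lambda_1 \le \lambda_2 \le \cdots$ gives $\beta_1^c \le \beta_2^c \le \cdots$, and exactly the first $n$ eigendirections are active on the interval $(\beta_n^c,\beta_{n+1}^c)$, so $\rank(A_\beta) = n$ there. The main obstacle is the last step: verifying that among the multiple critical points of the non-convex Lagrangian the ``keep the top $n$ eigendirections'' configuration is the global minimum, and that the rank truly jumps by exactly one (rather than several) at each $\beta_i^c$. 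I would handle this by a direct second-order comparison of Lagrangian values at the candidate critical points, exploiting the fact that activating the eigendirection with the smallest available $\lambda_i$ yields the steepest decrease in the objective just above $\beta_i^c$; the formal details are inherited from \citet{NIPS2003_7e05d6f8}.
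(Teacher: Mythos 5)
The paper itself does not prove this theorem: it is imported (``adapted from'') \citet{NIPS2003_7e05d6f8}, and your proposal is essentially a reconstruction of that source's argument --- restrict to a noisy linear Gaussian channel, write $I(T;X)$ and $I(T;Y)$ in closed form via log-determinants, derive the stationarity condition whose nonzero rows of $A$ are left eigenvectors of $\Sigma_{X\mid Y}\Sigma_X^{-1}$, and obtain the critical values $\beta_i^c = 1/(1-\lambda_i)$ with rank growing by one eigendirection per critical value. That is the same route the paper relies on, and the structural conclusions you state (ordering of the $\beta_i^c$, $\rank(A_\beta)=n$ between consecutive critical values) match the cited result.

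One step of your sketch is not sound as written: the reduction to Gaussian $T$. You argue that Gaussianizing $T$ (keeping second moments) ``can only decrease $I(T;X)$ without decreasing $I(T;Y)$'' by maximum entropy. But Gaussianization raises \emph{both} entropies in each difference --- e.g.\ $I(T;X)=h(T)-h(T\mid X)$ and the Gaussian surrogate increases $h(T)$ and $h(T\mid X)$ simultaneously --- so the claimed one-sided effect on the IB Lagrangian does not follow from the max-entropy property alone. The rigorous version of this step (as in the Chechik--Globerson--Tishby--Weiss treatment) goes through the entropy power inequality to show that a Gaussian channel is optimal for jointly Gaussian $(X,Y)$; if you intend to inherit that lemma from the citation you should say so explicitly rather than offer the heuristic swap. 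Likewise, your final paragraph correctly identifies that global optimality among the non-convex critical points (and the exact one-dimensional rank increment at each $\beta_i^c$) is the delicate part, but it is deferred rather than resolved; since the paper itself simply quotes the theorem, this deferral is acceptable, but the proposal should not present the max-entropy argument as if it closed the Gaussianity gap.
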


This theorem indicates that as we adjust the trade-off parameter $\beta$, there are bifurcation points where the rank of the optimal linear transformation $A_\beta$ changes. This corresponds to changes in the dimensionality of the representation $T$, which is directly related to the local rank in the case of neural networks.

\subsection{Empirical Evidence Connecting Local Rank and Information Compression}

After establishing an analytical connection between local rank and the trade-off parameter $\beta$ in the Gaussian case, we now empirically test this relationship on both synthetic and more complex datasets. We train Deep Variational Information Bottleneck (VIB) models \citep{alemi2016deep} and observe the effect of the IB trade-off parameter $\beta$ and analytical phase transitions on the local rank of the encoder.

\begin{figure}[ht]
    \centering
    \includegraphics[width=\linewidth]{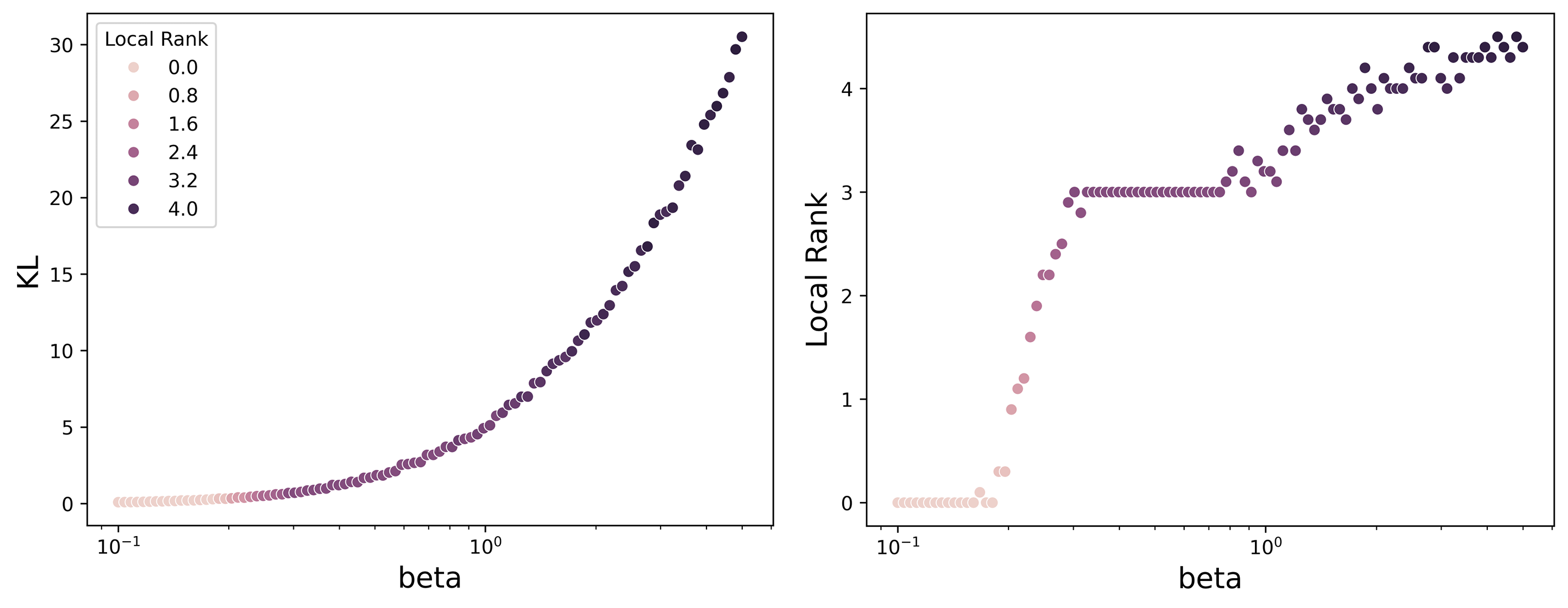}
    \caption{\textbf{Empirical Results on Gaussian Data using Deep-VIB.}
    \textbf{Left:} KL divergence component of the loss versus $\beta$, with points colored by empirical local rank corresponding to critical $\beta$ values.
    \textbf{Right:} Local rank as a function of $\beta$, showing an increase with $\beta$ and distinct phase transitions. We provide more information in Appendix \ref{sec:Gauss_DEEPCIB}.} 
    \label{fig:gaussian_vib}
\end{figure}

\paragraph{Gaussian Data.} In the first experiment, we train Deep VIB models to map between two correlated Gaussian distributions in $\R^5$. In the \textbf{left} of Figure \ref{fig:gaussian_vib}, we show that the KL divergence component of the loss varies with $\beta$, as expected. The points are colored according to the empirical local rank values, which correspond to the closest critical $\beta$ values predicted by the theory. In the \textbf{right} of the figure, we plot the local rank as a function of $\beta$, observing that it increases with $\beta$ and there is a distinct phase transition, aligning with the theoretical predictions. See Appendix \ref{sec:Gauss_DEEPCIB} for more information.

\paragraph{MNIST and Fashion-MNIST Data.} In the second experiment, we train Deep VIB models on the MNIST \citep{mnist} and Fashion-MNIST \citep{xiao2017online} datasets. As we increase $\beta$, we observe that the local rank increases and the accuracy decreases.

\begin{figure}[ht]
    \centering
    \includegraphics[width=\linewidth]{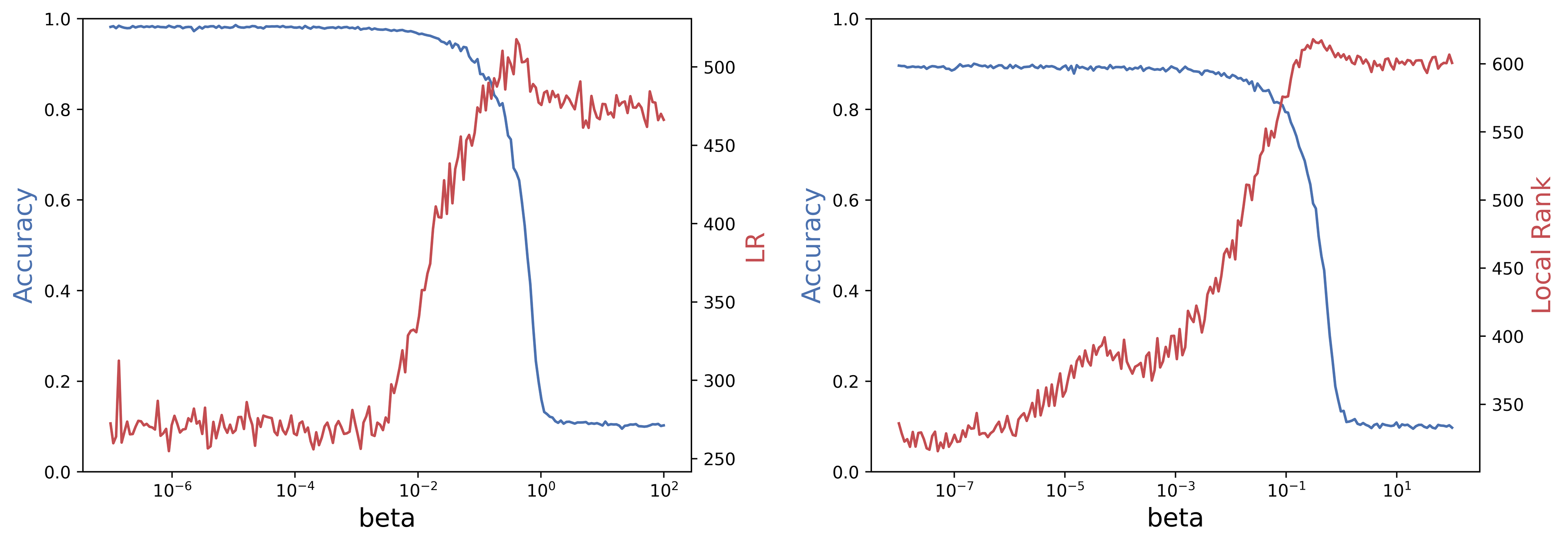}
    \caption{Empirical Results on MNIST and Fashion-MNIST.
    \textbf{Left:} MNIST dataset.
    \textbf{Right:} Fashion-MNIST dataset. As we increase $\beta$, the local rank increases, and accuracy decreases, indicating that higher $\beta$ values correspond to less compressed representations and lower performance.}
    \label{fig:vib_local_rank}
\end{figure}

In both experiments, changing $\beta$ leads to a reduction in local rank, indicating increased information compression. This supports our hypothesis that local rank is indicative of the level of information compression in the network, even for complex datasets.

\section{Discussion and Future Work}
\label{sec:discussion}

Our work introduces the concept of local rank as a meaningful measure of the dimensionality of feature manifolds in deep neural networks. We have demonstrated both theoretically and empirically that local rank decreases during the terminal phase of training, suggesting that networks compress the dimensionality of their representations.

By connecting local rank to the Information Bottleneck theory, we provide a new perspective on how neural networks manage the trade-off between compression and prediction. Our findings imply that networks naturally perform information compression by reducing the rank of their learned representations.

Understanding the behavior of local rank has implications for model compression, generalization, and the design of neural network architectures. Future research could formalize the relationship between local rank and mutual information in non-Gaussian settings, extend the analysis to other network architectures, and explore practical applications in compression techniques.

\newpage


\medskip

\small

\bibliography{references}


\appendix

\section{Proofs}
\label{sec:proofs}

\subsection{For Binary Classification}
\label{sec:proofsclass}
\begin{lemma}
    There exists $\epsilon_0 > 0$ such that for all $\epsilon < \epsilon_0$:
    \begin{equation}
        \rank_\epsilon(J_x p_l(x)) \leq \rank_\epsilon(W_l)
    \end{equation}
\end{lemma}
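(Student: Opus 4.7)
The plan is to reduce the $\epsilon$-rank statement to an ordinary rank inequality via the chain rule, and then pass from ordinary rank to $\epsilon$-rank by taking $\epsilon$ smaller than any strictly positive singular value of the two matrices involved.

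First, I would use the chain rule. Since $p_l(x) = W_l h_{l-1}(x) + b_l$, at every point where $h_{l-1}$ is differentiable (almost everywhere for ReLU networks), we have
\begin{equation}
    J_x p_l(x) \;=\; W_l \cdot J_x h_{l-1}(x).
\end{equation}
By the standard submultiplicativity $\rank(AB) \leq \min\{\rank(A), \rank(B)\}$, this immediately gives $\rank(J_x p_l(x)) \leq \rank(W_l)$. Note there is no attempt here to use a singular value inequality of the form $\sigma_i(AB) \leq \sigma_i(A)\|B\|_\sigma$, which would only transfer $\epsilon$-rank when $\|J_x h_{l-1}\|_\sigma \leq 1$, a condition not available to us.

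Next, I would upgrade the ordinary-rank inequality to an $\epsilon$-rank inequality. Both $W_l$ and $J_x p_l(x)$ are finite matrices, and when nonzero each has a smallest strictly positive singular value (the zero-matrix case is trivial and gives $\rank_\epsilon = 0$ on the left). Let $\sigma^+_{\min}(W_l)$ and $\sigma^+_{\min}(J_x p_l(x))$ denote these values, and set
\begin{equation}
    \epsilon_0 \;=\; \min\bigl\{\sigma^+_{\min}(W_l),\ \sigma^+_{\min}(J_x p_l(x))\bigr\}.
\end{equation}
For any $0 < \epsilon < \epsilon_0$, the $\epsilon$-threshold discards no strictly positive singular value of either matrix, so $\rank_\epsilon(W_l) = \rank(W_l)$ and $\rank_\epsilon(J_x p_l(x)) = \rank(J_x p_l(x))$. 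Combined with the previous display, this yields the claim.

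The main subtlety — really the only nontrivial point — is that $\epsilon_0$ depends on $x$ through $J_x p_l(x)$, so the lemma as written should be read pointwise in $x$. When this bound is used inside the expectation defining $\LR_l^\epsilon$ in Proposition \ref{prop:local_rank_bound}, one must either control $\sigma^+_{\min}(J_x p_l(x))$ uniformly over the support of the data distribution or absorb the $x$-dependence via a monotone/dominated convergence argument as $\epsilon \to 0^+$; I would expect the formal argument in the appendix to address this measure-theoretic point separately. Within the statement of the lemma itself, the argument is essentially a one-line chain rule followed by a judicious choice of threshold, so no real obstacle arises.
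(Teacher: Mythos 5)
Your proposal is correct and follows essentially the same route as the paper: factor the Jacobian through $W_l$ (the paper writes out the full product $W_l D_{l-1} W_{l-1} \cdots$, you invoke the chain rule, which is the same thing), conclude $\rank(J_x p_l(x)) \leq \rank(W_l)$, and then pass to $\epsilon$-rank for $\epsilon$ below a threshold — where the paper argues via $\lim_{\epsilon \to 0}\rank_\epsilon = \rank$ plus discreteness, you equivalently take $\epsilon_0$ to be the smallest positive singular value of the two matrices. Your closing remark about the $x$-dependence of $\epsilon_0$ is a fair observation, but it applies equally to the paper's own argument and does not distinguish the two proofs.
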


\begin{proof}
Notice first that a calculation gives us the following, where $W_i$ are the weight matrices and $D_i$ are diagonal matrices with $0$ or $1$ on the diagonal. These diagonal matrices correspond to the activation pattern of the ReLU functions at a specific layer. 

\[
J_x p_l(x) = W_l D_{l-1} W_{l-1} D_{l-2} \cdots W_0
\]

Notice now it is clear that:

\[
\rank (J_x p_l(x)) \leq \rank(W_l)
\]

Notice also that for any matrix $A$:  $\lim_{\epsilon \to 0} \rank_\epsilon(A) = \rank(A)$. Since $\rank_\epsilon(\cdot)$ takes on values in a discrete set, its clear that there exists some $\epsilon_0 > 0$ such that $\epsilon < \epsilon_0$ implies that:

\[
\rank_\epsilon (J_x p_l(x)) \leq \rank_\epsilon(W_l)
\]

\end{proof}

We now recall a theorem courtesy of \cite{pmlr-v201-timor23a}:

\begin{theorem}
\label{opt_thm}
\textbf{(Quoted from \cite{pmlr-v201-timor23a})} Let $\{(x_i, y_i)\}_{i=1}^n \subseteq \mathbb{R}^{n_0} \times \{-1,1\}$ be a binary classification dataset, and assume that there is $i \in [n]$ with $\|x_i\| \leq 1$. Assume that there is a fully-connected neural network $N$ of width $m \geq 2$ and depth $k \geq 2$, such that for all $i \in [n]$ we have $y_i N(x_i) \geq 1$, and the weight matrices $W_1, \ldots, W_k$ of $N$ satisfy $\|W_i\|_F \leq B$ for some $B > 0$. Let $N_\theta$ be a fully-connected neural network of width $m' \geq m$ and depth $k' > k$ parameterized by $\theta$. Let $\theta^* = [W_1^*, \ldots, W_{L}^*]$ be a global optimum of the above optimization problem \ref{optimization_problem}. Namely, $\theta^*$ parameterizes a minimum-norm fully-connected network of width $n_l$ and depth $L$ that labels the dataset correctly with margin 1. Then, we have
\begin{equation}
\frac{1}{L} \sum_{i=1}^{L} \frac{\|W_i^*\|_\sigma}{\|W_i^*\|_F} \geq \frac{1}{\sqrt{2}} \cdot \left(\frac{\sqrt{2}}{B}\right)^{\frac{k}{L}} \cdot \sqrt{\frac{L}{L + 1}}.
\end{equation}
Equivalently, we have the following upper bound on the harmonic mean of the ratios $\frac{\|W_i^*\|_F}{\|W_i^*\|_\sigma}$:
\begin{equation}
\frac{L}{\sum_{i=1}^{L} \left(\frac{\|W_i^*\|_F}{\|W_i^*\|_\sigma}\right)^{-1}} 
\leq
\sqrt{2} \cdot \left(\frac{B}{\sqrt{2}}\right)^{\frac{k}{L}} \cdot \sqrt{\frac{L + 1}{L}}.
\end{equation}
\end{theorem}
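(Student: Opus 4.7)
The plan is to combine three ingredients: a lower bound on $\prod_l \|W_l^*\|_\sigma$ coming from the margin constraint, an upper bound on $\|\theta^*\|^2 = \sum_l \|W_l^*\|_F^2$ coming from the minimum-norm property applied to a depth-$L$ embedding of the reference network $N$, and a two-step AM-GM that assembles these into the stated bound on the arithmetic mean of the ratios $\|W_l^*\|_\sigma / \|W_l^*\|_F$.

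First, I would use the margin condition to show $\prod_{l=1}^L \|W_l^*\|_\sigma \geq 1$. Since ReLU is $1$-Lipschitz and the setting is bias-free (standard for these KKT analyses of positively homogeneous nets), the Lipschitz constant of $\mathcal{N}_{\theta^*}$ is at most $\prod_l \|W_l^*\|_\sigma$, so $|\mathcal{N}_{\theta^*}(x_i)| \leq \prod_l \|W_l^*\|_\sigma \cdot \|x_i\|$. The margin condition $y_i \mathcal{N}_{\theta^*}(x_i) \geq 1$ at the index with $\|x_i\| \leq 1$ then forces $\prod_l \|W_l^*\|_\sigma \geq 1$.

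Next, I would construct a depth-$L$ competitor network by embedding $N$. The construction exploits positive-homogeneity of ReLU to split each of the $k$ existing layers into thinner chains separated by identity-like ReLU gadgets (for instance, via the positive/negative decomposition $x \mapsto (\mathrm{ReLU}(x), \mathrm{ReLU}(-x))$), inserting $L-k$ extra linear pieces while remaining within ambient width $m' \geq m$. Rebalancing the resulting network so that its per-layer Frobenius norms are equal (which only lowers $\sum_l \|W_l\|_F^2$ by AM-GM and preserves the realized function by positive homogeneity) yields a valid margin-$1$ classifier with $\sum_l \|W_l\|_F^2 \leq 2(L+1)\bigl(B/\sqrt{2}\bigr)^{2k/L}$. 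The minimum-norm property of $\theta^*$ transfers this bound to $\|\theta^*\|^2$.

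Finally, I would close with AM-GM applied twice. The first application gives $\tfrac{1}{L}\sum_l \tfrac{\|W_l^*\|_\sigma}{\|W_l^*\|_F} \geq \bigl(\prod_l \|W_l^*\|_\sigma\bigr)^{1/L} \big/ \bigl(\prod_l \|W_l^*\|_F\bigr)^{1/L}$, whose numerator is $\geq 1$ by the margin step. For the denominator, a second AM-GM gives $\bigl(\prod_l \|W_l^*\|_F^2\bigr)^{1/L} \leq \|\theta^*\|^2/L$, so $\bigl(\prod_l \|W_l^*\|_F\bigr)^{1/L} \leq \sqrt{\|\theta^*\|^2/L}$. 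Substituting the embedding bound produces exactly $\tfrac{1}{L}\sum_l \|W_l^*\|_\sigma/\|W_l^*\|_F \geq \tfrac{1}{\sqrt{2}}(\sqrt{2}/B)^{k/L}\sqrt{L/(L+1)}$, and the harmonic-mean form follows immediately by reciprocation.

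The main obstacle will be the embedding construction and its precise constants. The delicate step is demonstrating that the $L-k$ layer insertions can be realized within the allowed width $m' \geq m$ (rather than requiring width doubling), and carefully tracking how each ReLU split contributes a factor of $\sqrt{2}$ to the product of Frobenius norms, together with identifying where the $(L+1)/L$ factor arises from rebalancing against the depth mismatch between $k$ and $L$. Once that quantitative bound on $\|\theta^*\|^2$ is established, the remaining AM-GM manipulations are mechanical.
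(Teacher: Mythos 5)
This theorem is not proved in the paper at all: it is imported verbatim from \cite{pmlr-v201-timor23a} and used as a black box, so there is no in-paper proof to compare against. What you have written is, in outline, a faithful reconstruction of the argument in that cited reference: the margin constraint plus the Lipschitz bound $|\mathcal{N}_{\theta^*}(x_i)| \leq \prod_l \|W_l^*\|_\sigma \|x_i\|$ (valid for bias-free ReLU nets since they fix the origin) gives $\prod_l \|W_l^*\|_\sigma \geq 1$; a depth-padded, norm-rebalanced competitor built from $N$ bounds $\|\theta^*\|^2$; and the two AM-GM applications you describe do assemble these into exactly the stated constant, with the harmonic-mean form following by exact reciprocation. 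Those reductions are all sound.

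The one genuine gap is the step you yourself flag: the competitor construction. Everything hinges on producing a depth-$L$ network of width at most $m'$ that classifies with margin $1$ and satisfies $\sum_l \|W_l\|_F^2 \leq 2(L+1)\left(B/\sqrt{2}\right)^{2k/L}$, and you assert this bound rather than derive it. The difficulty is not cosmetic: the naive identity gadget $z \mapsto \bigl(\mathrm{ReLU}(z), \mathrm{ReLU}(-z)\bigr)$ applied to the scalar output costs a Frobenius norm of $2$ per inserted layer (the matrix sending $(z_+, z_-)$ to $(z, -z)$ has four unit entries), which after rebalancing yields a bound of the form $4L(B/2)^{2k/L}$ rather than $2(L+1)(B/\sqrt{2})^{2k/L}$; these do not coincide, so the particular padding scheme matters and must be chosen to match the $\sqrt{2}$-per-layer accounting and to explain where the $L+1$ (as opposed to $L$) comes from. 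Until that lemma is established with the exact constants, the proof is a correct skeleton around an unproved core, and the honest course in the context of this paper is what the authors do: cite the result.
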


For convenience we restate our proposition in full formality:

\begin{proposition}
Let $\{(x_i, y_i)\}_{i=1}^n \subseteq \mathbb{R}^{n_0} \times \{-1,1\}$ be a binary classification dataset, and assume that there is $i \in [n]$ with $\|x_i\| \leq 1$. Assume that there is a fully-connected neural network $N$ of width $m \geq 2$ and depth $k \geq 2$, such that for all $i \in [n]$ we have $y_i N(x_i) \geq 1$, and the weight matrices $W_1, \ldots, W_k$ of $N$ satisfy $\|W_i\|_F \leq B$ for some $B > 0$. Let $N_\theta$ be a fully-connected neural network of width $m' \geq m$ and depth $k' > k$ parameterized by $\theta$. Let $\theta^* = [W_1^*, \ldots, W_{L}^*]$ be a global optimum of the above optimization problem \ref{optimization_problem}. Namely, $\theta^*$ parameterizes a minimum-norm fully-connected network of width $n_l$ and depth $L$ that labels the dataset correctly with margin 1. Then, there exists an $l\in \{1, \cdots , L\}$ and $\epsilon_0 > 0$ such that for $0 < \epsilon <\epsilon_0$ the following holds:

\begin{equation}
    \frac{\textbf{LR}_l^\epsilon}{||W_l^*||_\sigma^2} \leq \frac{2}{\epsilon^2} \cdot (\frac{B}{\sqrt{2}})^{\frac{2k}{L}} \cdot \frac{L+1}L
\end{equation}
\end{proposition}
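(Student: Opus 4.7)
The plan is to compose three ingredients: (i) the preceding lemma, which gives $\rank_\epsilon(J_x p_l) \leq \rank_\epsilon(W_l^*)$ once $\epsilon$ is small enough; (ii) the elementary inequality $\rank_\epsilon(A) \leq \|A\|_F^2/\epsilon^2$, since each singular value above $\epsilon$ contributes at least $\epsilon^2$ to $\|A\|_F^2 = \sum_i \sigma_i(A)^2$; and (iii) Theorem~\ref{opt_thm}, which controls the ratios $\|W_i^*\|_F/\|W_i^*\|_\sigma$ across the layers of the minimum-norm, margin-1 interpolator. Chaining these and choosing the layer $l$ at which the Timor ratio is smallest will yield the claimed bound.

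Concretely, for each layer $l$ I would first apply the lemma at every training input $x_i$ to obtain per-input thresholds $\epsilon_0^{(l)}(x_i) > 0$, then take $\epsilon_0^{(l)} := \min_i \epsilon_0^{(l)}(x_i) > 0$ (well defined since the dataset is finite). Averaging the lemma's inequality over the empirical distribution and combining with (ii) yields
$$\textbf{LR}_l^\epsilon \;\leq\; \rank_\epsilon(W_l^*) \;\leq\; \frac{\|W_l^*\|_F^2}{\epsilon^2} \;=\; \frac{\|W_l^*\|_\sigma^2}{\epsilon^2}\left(\frac{\|W_l^*\|_F}{\|W_l^*\|_\sigma}\right)^2$$
for every $\epsilon < \epsilon_0^{(l)}$. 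The right-hand side no longer depends on $x$, so the expectation is trivial and only the singular structure of $W_l^*$ matters.

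To pick the layer, I would invoke Theorem~\ref{opt_thm}, which bounds the harmonic mean of $\|W_i^*\|_F/\|W_i^*\|_\sigma$ over $i \in \{1,\ldots,L\}$ by $\sqrt{2}(B/\sqrt{2})^{k/L}\sqrt{(L+1)/L}$. Since the harmonic mean of a positive sequence is at least its minimum, some layer $l$ must satisfy the same inequality. Squaring it and substituting $\|W_l^*\|_F^2/\|W_l^*\|_\sigma^2 \leq 2(B/\sqrt{2})^{2k/L}(L+1)/L$ into the display above, then dividing through by $\|W_l^*\|_\sigma^2$, produces the claimed inequality, with the existential $\epsilon_0$ in the proposition statement taken as $\epsilon_0^{(l)}$ for this specific layer.

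The only real obstacle is the bookkeeping around the uniform choice of $\epsilon_0$: the lemma is stated pointwise in $x$, and one must ensure that its threshold remains positive after the expectation. On the empirical training distribution this is immediate, since we take a finite minimum of positive numbers; extending to a continuous data distribution would use the fact that ReLU activations partition input space into finitely many cells on which $J_x p_l = W_l D_{l-1} W_{l-1} \cdots D_0 W_1$ is a fixed product of weights and sign matrices, so a single positive threshold still suffices. Everything else is a direct composition of the three cited facts with no additional analysis required.
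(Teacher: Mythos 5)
Your proposal is correct and follows essentially the same route as the paper's proof: the pointwise lemma $\rank_\epsilon(J_x p_l)\leq\rank_\epsilon(W_l^*)$, the elementary bound $\rank_\epsilon(A)\leq\|A\|_F^2/\epsilon^2$, and the Timor et al.\ harmonic-mean bound with the observation that some layer's ratio falls below the harmonic mean, followed by taking expectation over the data. Your extra bookkeeping on choosing $\epsilon_0$ uniformly in $x$ (finite minimum on training data, or finitely many ReLU activation patterns in general) is a slightly more careful treatment of a step the paper passes over silently, but it does not change the argument.
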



\begin{proof}
Notice first that for any arbitrary matrix $A$, we have that,

\[
||A||_F = \sqrt{\sum_{i=1}^n \sigma_i^2} \geq \sqrt{\rank_\epsilon(A)\epsilon^2} = \epsilon \sqrt{\rank_\epsilon(A)}
\]

So then,

\begin{equation}
    \frac{||A||_F}{||A||_\sigma} \geq \frac{\epsilon}{||A||_\sigma} \sqrt{\rank_\epsilon(A)}
\end{equation}

Notice now that from application of the theorem in \cite{pmlr-v201-timor23a}, we can get that harmonic mean of the quantities $\frac{||W_i^*||_F}{||W_i^*||_\sigma}$ is bounded. In particular this means that there exists at least one $l \in \{1, \cdots, L\}$ such that $\frac{||W_l^*||_F}{||W_l^*||_\sigma}$ lies below this harmonic mean. So then,

\begin{align}
   \sqrt{2} \cdot \left(\frac{B}{\sqrt{2}}\right)^{\frac{k}{L}} \cdot \sqrt{\frac{L + 1}{L}}
&\geq 
\frac{||W_l^*||_F}{||W_l^*||_\sigma}   \\
&\geq \frac{\epsilon}{||W_l^*||_\sigma} \sqrt{\rank_\epsilon(W_l^*)}
\end{align}

Re-arranging terms and squaring both sides gives us:

\[
\frac{\rank_\epsilon(W_l^*)}{||W_l^*||_\sigma^2} \leq \frac{2}{\epsilon^2} \cdot (\frac{B}{\sqrt{2}})^{\frac{2k}{L}} \cdot \frac{L+1}L
\]

Now if we can apply lemma 1 and we get that there exists $\epsilon' > 0$ such that for any $0 < \epsilon < \epsilon'$:

\[
\rank_\epsilon(J_x p_l(x)) \leq \rank_\epsilon(W_l^*)
\]

So then, we get:
\begin{equation}
    \frac{\rank_\epsilon(J_x p_l(x))}{||W_l^*||_\sigma^2} \leq \frac{2}{\epsilon^2} \cdot (\frac{B}{\sqrt{2}})^{\frac{2k}{L}} \cdot \frac{L+1}L
\end{equation}

Since this holds for any $x$, we can then take expectation with respect to the data distribution on the left hand side and we get that:

\begin{equation}
    \frac{\textbf{LR}_l^\epsilon}{||W_l^*||_\sigma^2} \leq \frac{2}{\epsilon^2} \cdot (\frac{B}{\sqrt{2}})^{\frac{2k}{L}} \cdot \frac{L+1}L
\end{equation}

\end{proof}

\subsection{For Interpolating Neural Networks}
\label{sec:proofsinterp}
As before we recall the analogous theorem from \cite{timor2023implicit} for interpolating neural networks.

\begin{theorem}
\label{opt_thm1}
\textbf{(Quoted from \cite{pmlr-v201-timor23a})}
Let $\{(x_i, y_i)\}_{i=1}^n \subset \mathbb{R}^{n_0} \times \mathbb{R}_+$ be a dataset, and assume that there is $i \in [n]$ with $\|x_i\| \leq 1$ and $y_i \geq 1$. Assume that there is a fully-connected neural network $N$ of width $m \geq 2$ and depth $k \geq 2$, such that for all $i \in [n]$ we have $N(x_i) = y_i$, and the weight matrices $W_1, \dots, W_k$ of $N$ satisfy $\|W_i\|_F \leq B$ for some $B > 0$. Let $N_\theta$ be a fully-connected neural network of width $m' \geq m$ and depth $L > k$ parameterized by $\theta$. Let $\theta^* = [W_1^*, \dots, W_{L}^*]$ be a global optimum of the following problem:

\begin{equation}
\min_\theta \|\theta\| \quad \text{s.t.} \quad \forall i \in [n] \ N_\theta(x_i) = y_i.
\end{equation}

Then,
\begin{equation}
\frac{1}{L} \sum_{i=1}^{L} \frac{\|W_i^*\|_\sigma}{\|W_i^*\|_F} \geq \left(\frac{1}{B}\right)^{\frac{k}{L}}.
\end{equation}

Equivalently, we have the following upper bound on the harmonic mean of the ratios $\frac{\|W_i^*\|_F}{\|W_i^*\|_\sigma}$:
\begin{equation}
\frac{L}{\sum_{i=1}^{L} \left(\frac{\|W_i^*\|_F}{\|W_i^*\|_\sigma}\right)^{-1}} \leq B^{\frac{k}{L}}.
\end{equation}

\end{theorem}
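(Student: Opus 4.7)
The plan is to combine three ingredients that are standard in the analysis of minimum-norm ReLU networks: (i) the rescaling (``positive-homogeneity'') symmetry $W_i \mapsto \alpha_i W_i$ with $\prod_i \alpha_i = 1$ that leaves any ReLU network's input--output map unchanged; (ii) a competing-network construction that promotes the hypothesised depth-$k$ interpolator to a depth-$L$ interpolator of controlled Frobenius product; and (iii) a Lipschitz lower bound coming from the data point with $\|x_i\|\leq 1$ and $y_i\geq 1$. A single application of AM--GM then converts the resulting bounds on $\prod_i \|W_i^*\|_F$ and $\prod_i \|W_i^*\|_\sigma$ into the claimed bound on the arithmetic mean of ratios.

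First I would reduce to information about the rescaling-invariant quantity $\prod_i \|W_i^*\|_F$. Since $\sum_i \|W_i\|_F^2$ is minimised over each rescaling orbit (by AM--GM) exactly when all $\|W_i\|_F$ are equal, balancing $\theta^*$ gives $\|W_1^*\|_F=\cdots=\|W_L^*\|_F=c^*$ with $\sum_i\|W_i^*\|_F^2 = L(c^*)^2$ and $\prod_i\|W_i^*\|_F=(c^*)^L$. Thus any upper bound on $\prod_i\|W_i^*\|_F$ yields a matching upper bound on each Frobenius norm.

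Next I would produce the bound $\prod_i\|W_i^*\|_F\leq B^k$ by exhibiting a depth-$L$ competitor. Starting from the depth-$k$ interpolator $N$ with $\|V_j\|_F\leq B$, I insert $L-k$ ``routing'' layers between ReLU activations, each a rank-$1$ matrix of the form $e_1 e_1^T$ after a permutation placing a guaranteed-positive coordinate of $N$ at position $1$. Such a matrix has Frobenius and operator norm both equal to $1$, acts as identity on the routed non-negative coordinate, and therefore preserves the function computed. The padded network is a feasible depth-$L$ interpolator with $\prod_i\|W_i^{\text{comp}}\|_F\leq B^k$; after rebalancing the competitor, $\sum_i\|W_i^{\text{comp}}\|_F^2\leq L\,B^{2k/L}$, and the min-norm property combined with AM--GM on $\|W_i^*\|_F^2$ forces $\prod_i\|W_i^*\|_F \leq B^k$. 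Independently, the Lipschitz constant of any ReLU network is bounded above by $\prod_i\|W_i^*\|_\sigma$, and in the bias-free setting of Timor et al., $N_{\theta^*}(0)=0$, so $\prod_i\|W_i^*\|_\sigma \geq \mathrm{Lip}(N_{\theta^*}) \geq |y_i|/\|x_i\|\geq 1$. Applying AM--GM to the ratios then yields
\begin{equation*}
\frac{1}{L}\sum_{i=1}^{L}\frac{\|W_i^*\|_\sigma}{\|W_i^*\|_F} \geq \left(\prod_{i=1}^{L}\frac{\|W_i^*\|_\sigma}{\|W_i^*\|_F}\right)^{1/L} = \frac{\bigl(\prod_i\|W_i^*\|_\sigma\bigr)^{1/L}}{\bigl(\prod_i\|W_i^*\|_F\bigr)^{1/L}} \geq \frac{1}{B^{k/L}},
\end{equation*}
and the harmonic-mean reformulation follows from one further AM--GM applied to the reciprocals.

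The main obstacle is the competing-network construction. The padded network must be a genuinely fully-connected ReLU network of width $m'\geq m$, so the routing matrices must live in the ambient dimension $m'$; yet they must (a) not be zeroed by the intervening ReLUs on any training input and (b) not inflate the Frobenius product (which rules out an $m'\times m'$ identity, whose Frobenius norm is $\sqrt{m'}$). The rank-$1$ routing through a known-positive coordinate resolves both issues, but making the choice of that coordinate consistent across all data points simultaneously requires a small pre-composed permutation (or a carefully chosen affine encoding of the scalar output of $N$ into a fixed always-positive unit). Once this combinatorial detail is in place, the remainder of the argument is essentially algebraic manipulation of products, sums, and the AM--GM inequality.
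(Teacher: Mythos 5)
This theorem is imported verbatim from \citet{pmlr-v201-timor23a}; the paper you are comparing against supplies no proof of it, only the citation. Your argument is, in substance, a correct reconstruction of the original proof, and it follows the same route: (i) the rescaling symmetry of positively homogeneous bias-free networks forces a global optimum to be balanced, so $\|W_1^*\|_F=\cdots=\|W_L^*\|_F$ and bounding $\prod_i\|W_i^*\|_F$ suffices; (ii) padding the depth-$k$ interpolator with $L-k$ norm-one layers that route the (nonnegative, since $y_i\in\mathbb{R}_+$) output through the intervening ReLUs yields a feasible depth-$L$ competitor with Frobenius product at most $B^k$, and minimality plus balancing gives $\prod_i\|W_i^*\|_F\le B^k$; (iii) the data point with $\|x_i\|\le 1$, $y_i\ge 1$ together with $N_{\theta^*}(0)=0$ gives $\prod_i\|W_i^*\|_\sigma\ge 1$; (iv) AM--GM on the ratios finishes the argument. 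You also correctly identify the only delicate point, namely that the padding layers must live in width $m'$ without inflating the Frobenius norm, which your rank-one $e_1e_1^{T}$ routing handles. Two small remarks: the final "harmonic-mean" form is not another application of AM--GM but simply the reciprocal of the arithmetic-mean inequality you already derived (the two displays in the statement are algebraically equivalent); and strictly speaking step (i) should note that no $\|W_i^*\|_F$ can vanish (a zero layer would make the network identically zero, contradicting interpolation of some $y_i\ge 1$), so the balancing rescaling is well defined.
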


We can now restate our proposition with a proof.

\begin{proposition}
    Let $\{(x_i, y_i)\}_{i=1}^n \subset \mathbb{R}^{n_0} \times \mathbb{R}_+$ be a dataset, and assume that there is $i \in [n]$ with $\|x_i\| \leq 1$ and $y_i \geq 1$. Assume that there is a fully-connected neural network $N$ of width $m \geq 2$ and depth $k \geq 2$, such that for all $i \in [n]$ we have $N(x_i) = y_i$, and the weight matrices $W_1, \dots, W_k$ of $N$ satisfy $\|W_i\|_F \leq B$ for some $B > 0$. Let $N_\theta$ be a fully-connected neural network of width $m' \geq m$ and depth $L > k$ parameterized by $\theta$. Let $\theta^* = [W_1^*, \dots, W_{L}^*]$ be a global optimum of the following problem:

\begin{equation}
\min_\theta \|\theta\| \quad \text{s.t.} \quad \forall i \in [n], \ \N_\theta(x_i) = y_i.
\end{equation}

Then, there exist an $l\in \{1, \cdots , L\}$ and $\epsilon_0 > 0$ such that for $0 < \epsilon <\epsilon_0$ the following holds:

\begin{equation}
    \frac{\textbf{LR}_l^\epsilon}{||W_l||_\sigma^2}   \leq \frac{B^\frac{2k}{L}}{\epsilon^2}
\end{equation}
\end{proposition}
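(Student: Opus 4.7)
The plan is to mirror the proof of Proposition \ref{prop:local_rank_bound} almost verbatim, substituting the regression-variant harmonic-mean bound (Theorem \ref{opt_thm1}) in place of the classification variant. The three ingredients remain: (i) the Frobenius-over-operator bound on the $\epsilon$-rank of any matrix, (ii) the Timor et al.\ harmonic-mean bound on the layerwise ratios $\|W_i^*\|_F/\|W_i^*\|_\sigma$ at the minimum-norm interpolating solution, and (iii) Lemma 1, which dominates $\rank_\epsilon(J_x p_l)$ by $\rank_\epsilon(W_l)$ for small enough $\epsilon$. Because the conclusion is pointwise in $x$ before taking expectations, no regression-specific geometric argument is needed beyond what Timor et al.\ supply.

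First I would record the elementary inequality $\|A\|_F \geq \epsilon\sqrt{\rank_\epsilon(A)}$, which rearranges to
\begin{equation}
\frac{\|A\|_F}{\|A\|_\sigma} \;\geq\; \frac{\epsilon}{\|A\|_\sigma}\sqrt{\rank_\epsilon(A)}.
\end{equation}
Next I would invoke Theorem \ref{opt_thm1}: at the global optimum $\theta^* = [W_1^*,\dots,W_L^*]$, the harmonic mean of $\|W_i^*\|_F/\|W_i^*\|_\sigma$ is bounded above by $B^{k/L}$. Since the harmonic mean dominates the minimum, there exists at least one index $l \in \{1,\dots,L\}$ for which $\|W_l^*\|_F/\|W_l^*\|_\sigma \leq B^{k/L}$. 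Chaining this with the previous display and squaring yields
\begin{equation}
\frac{\rank_\epsilon(W_l^*)}{\|W_l^*\|_\sigma^2} \;\leq\; \frac{B^{2k/L}}{\epsilon^2}.
\end{equation}

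To finish, I would apply Lemma 1 to select $\epsilon_0 > 0$ small enough that $\rank_\epsilon(J_x p_l(x)) \leq \rank_\epsilon(W_l^*)$ for all $0 < \epsilon < \epsilon_0$; this holds at every input $x$ because the Jacobian factors as $W_l D_{l-1} W_{l-1}\cdots D_0 W_0$ in the ReLU case. Substituting and then taking expectation over $x \sim \text{Data}$ on the left converts $\rank_\epsilon(J_x p_l)$ into $\textbf{LR}_l^\epsilon$, which delivers the claimed inequality.

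I do not anticipate a significant technical obstacle: the structure is identical to the classification case, and the regression variant is actually cleaner since Theorem \ref{opt_thm1} lacks the extra $\sqrt{2}$ and $\sqrt{(L+1)/L}$ factors. The one point worth being careful about is that the $\epsilon_0$ from Lemma 1 is input-dependent in principle; however, since $\rank_\epsilon(\cdot)$ takes values in a finite discrete set and the weight matrices $W_l^*$ are fixed once $\theta^*$ is chosen, one can take $\epsilon_0$ to be the smallest nonzero singular value of $W_l^*$, which uniformizes the choice and justifies swapping expectation and the $\rank_\epsilon$ bound without measurability concerns.
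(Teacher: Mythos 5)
Your proposal is correct and follows essentially the same route as the paper's own proof: the regression-variant harmonic-mean bound of Timor et al., the elementary inequality $\|A\|_F \geq \epsilon\sqrt{\rank_\epsilon(A)}$, and Lemma 1 to pass from $\rank_\epsilon(W_l^*)$ to $\rank_\epsilon(J_x p_l)$ before taking expectation. Your added remark that $\epsilon_0$ can be chosen uniformly in $x$ (e.g.\ as the smallest nonzero singular value of $W_l^*$, since $\rank_\epsilon(J_x p_l)\leq \rank(J_x p_l)\leq \rank(W_l^*)=\rank_\epsilon(W_l^*)$ for such $\epsilon$) is a small but genuine tightening of a point the paper leaves implicit.
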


\begin{proof}
We first apply the prior theorem to get that the harmonic mean of the ratios of the Frobenius norm to the operator norm of the weight matrices are bounded like:

\begin{equation}
\frac{L}{\sum_{i=1}^{L} \left(\frac{\|W_i^*\|_F}{\|W_i^*\|_\sigma}\right)^{-1}} \leq B^{\frac{k}{L}}.
\end{equation}

In particular, there exists some layer $l$ such that its ratio falls below the Harmonic mean, so then:

\begin{equation}
    \frac{\|W_i^*\|_F}{\|W_i^*\|_\sigma} \leq B^{\frac{k}{L}}.
\end{equation}

Now recall that for any matrix $A$ we have:
\begin{equation}
    \frac{||A||_F}{||A||_\sigma} \geq \frac{\epsilon}{||A||_\sigma} \sqrt{\rank_\epsilon(A)}.
\end{equation}

Now apply this to $W_l$ and we get that:
\begin{equation}
        \frac{||W_l||_F}{||W_l||_\sigma} \geq \frac{\epsilon}{||W_l||_\sigma} \sqrt{\rank_\epsilon(W_l)}.
\end{equation}

We can now apply the lemma and we get that there exists $\epsilon_0 >0$ such that for any $0<\epsilon < \epsilon_0$:

\begin{equation}
        \frac{||W_l||_F}{||W_l||_\sigma} \geq \frac{\epsilon}{||W_l||_\sigma} \sqrt{\rank_\epsilon(W_l)} \geq \frac{\epsilon}{||W_l||_\sigma} \sqrt{\rank_\epsilon(J_x p_l (x))} .
\end{equation}

So then it follows that:

\begin{equation}
    \frac{\epsilon}{||W_l||_\sigma} \sqrt{\rank_\epsilon(J_x p_l (x))}  \leq B^\frac{k}{L}.
\end{equation}

Or equivalently, 

\begin{equation}
    \frac{\epsilon^2}{||W_l||_\sigma^2} \rank_\epsilon(J_x p_l (x))  \leq B^\frac{2k}{L}.
\end{equation}

Taking expectation over $x \sim \textbf{Data}$ now completes the proof as:

\begin{equation}
    \frac{\textbf{LR}_l^\epsilon}{||W_l||_\sigma^2}   \leq \frac{B^\frac{2k}{L}}{\epsilon^2}.
\end{equation}

\end{proof}

\section{On the Gaussian Deep VIB}
\label{sec:Gauss_DEEPCIB}
For figure \ref{fig:gaussian_vib}, our Deep VIB model is trained to map $X$ to $Y$ using a Deep Linear network as the encoder. Here we take both of these to be isotropic Gaussians in $\R^5$. We use the following cross-covariance matrix:

\begin{equation}
    \Sigma_{XY} = \begin{pmatrix}
0.1 & 0   & 0   & 0   & 0 \\
0   & 0.1 & 0   & 0   & 0 \\
0   & 0   & 0.5 & 0   & 0 \\
0   & 0   & 0   & 0.5 & 0 \\
0   & 0   & 0   & 0   & 0.5
\end{pmatrix}.
\end{equation}

For an intuition, this means that the last 3 variables are highly correlated between $X$ and $Y$, whereas the first two variables are only somewhat correlated. The theory would then suggest a phase transition, and a distinct jump from a rank to a rank of $3$ as we lower $\beta$. We note that we can observe this structure in figure \ref{fig:gaussian_vib} (right). 

\end{document}